\theoremstyle{plain}
\newtheorem{theorem}{Theorem}[section]
\newtheorem{lemma}[theorem]{Lemma}
\theoremstyle{definition}
\theoremstyle{remark}
\icmltitlerunning{Diffusion-Free Graph Generation with Next-Scale Prediction}
\begin{document}

\twocolumn[
\icmltitle{Diffusion-Free Graph Generation with Next-Scale Prediction}



\icmlsetsymbol{equal}{*}

\begin{icmlauthorlist}
\icmlauthor{Samuel Belkadi}{equal,cam_eng}
\icmlauthor{Steve Hong}{equal,cam_eng}
\icmlauthor{Marian Chen}{equal,cam_eng}
\icmlauthor{Miruna Cretu}{cam_cs}
\icmlauthor{Charles Harris}{cam_cs}
\icmlauthor{Pietro Li\`{o}}{cam_cs}
\end{icmlauthorlist}

\icmlaffiliation{cam_eng}{Department of Engineering, University of Cambridge, UK}
\icmlaffiliation{cam_cs}{Department of Computer Science, University of Cambridge, UK}

\icmlcorrespondingauthor{Samuel Belkadi}{sb2764@cam.ac.uk}
\icmlcorrespondingauthor{Steve Hong}{mdh58@cam.ac.uk}

\icmlkeywords{Multi-scale, Hierarchical, Next-scale, Diffusion-free, Autoregressive, Graph, Molecules, Generation, Transformer, Generative AI, Machine Learning}

\vskip 0.3in
]



\printAffiliationsAndNotice{\icmlEqualContribution} 

\begin{abstract}
Autoregressive models excel in efficiency and plug directly into the transformer ecosystem, delivering robust generalization, predictable scalability, and seamless workflows such as fine-tuning and parallelized training. However, they require an explicit sequence order, which contradicts the unordered nature of graphs. In contrast, diffusion models maintain permutation invariance and enable one-shot generation but require up to thousands of denoising steps and additional features for expressivity, leading to high computational costs. Inspired by recent breakthroughs in image generation, especially the success of visual autoregressive methods, we propose \texttt{MAG}, a novel diffusion-free graph generation framework based on next-scale prediction. By leveraging a hierarchy of latent representations, the model progressively generates scales of the entire graph without the need for explicit node ordering. Experiments on both generic and molecular graph datasets demonstrated the potential of this method, achieving inference speedups of up to three orders of magnitude over state-of-the-art methods, while preserving high-quality generation.
\end{abstract}

\section{Introduction}
\label{submission}
Graphs provide a natural and flexible information representation across a wide range of domains, including social networks, biological and molecular structures, recommender systems, and infrastructural networks. Consequently, the ability to learn a graph distribution from data and generate realistic graphs is pivotal for applications such as network science, drug discovery, and protein design.

Despite significant progress in generative models for language and images, graph generation remains challenging due to its inherent combinatorial nature. Specifically, graphs are naturally \textbf{high-dimensional} and \textbf{discrete} with \textbf{varying sizes}, contrasting with the continuous and fixed-size techniques that cannot be directly applied to them. Furthermore, rich substructures in graphs necessitate an expressive model capable of capturing higher-order motifs and interactions.

Several graph generative models have been proposed to address some of these challenges, undertaking approaches based on autoregressive (AR) models \citep{you2018graphrnn}, varational autoencoders (VAE) \citep{jin2018junction}, generative adversarial networks \citep{de2018molgan}, and diffusion models \citep{niu2020permutation, jo2022GDSS, vignac2023digress}. Among these, diffusion and autoregressive models excel in performance and offer complementary strengths—diffusion’s permutation invariance and AR’s efficiency—that make them especially promising avenues  \cite{kong2023autoregressive, zhao2024pard, yan2024swingnn}.

On one hand, diffusion models offer the ability to achieve exchangeable probability in combination with permutation equivariant networks, under certain conditions \citep{niu2020permutation, jo2022GDSS}. However, given the high-dimensional nature of graphs and their complex internal dependencies, directly modeling the joint distribution of all nodes and edges presents significant challenges. Recent work by \citet{zhao2024pard} demonstrates that, as a cost for permutation invariance, capturing the full joint distribution and solving the transformation difficulty via diffusion requires thousands of sampling and denoising steps as well as additional node-, edge-, and graph-level features, such as eigenvectors, to break symmetries and achieve high generation quality, rendering diffusion a promising but very expensive approach. 

On the other hand, in addition to their improved efficiency, studies into the success of AR models have highlighted their \textbf{scalability} and \textbf{generalizabilty}. The former, as exemplified by scaling laws \citep{henighan2020scaling}, allows the prediction of large models' performance to better guide resource allocation during training, while the latter, as evidenced by zero-shot and few-shot learning \citep{sanh2021multitask}, underscores the adaptability of unsupervised-trained models to unseen tasks. Furthermore, the aforementioned discrete and varying-size properties of graphs suggest a promising fit for autoregressive modeling. However, unlike natural language sentences with an inherent left-to-right ordering, the order of nodes in a graph must be explicitly defined for unidirectional autoregressive learning. Previous AR methods \citep{you2018graphrnn, liao2020efficient, shi2020graphaf} attempted to enforce node ordering or approximate the marginalization over permutations such that, once flattened, one can train an autoregressive model to maximize its likelihood via next-token prediction. This gives rise to the three main issues of autoregressive graph generation:

\begin{enumerate}
	\item \textbf{Long-range dependency.} Generating nodes one at a time results in sequential dependencies that make it difficult to capture long-range interactions and global structures. Furthermore, early predictions errors can propagate throughout the generation process, and dependencies may decay when working on larger graphs.
	\item \textbf{Forced node ordering.} Enforcing a specific node ordering is at odds with the inherent invariance property of graphs, forcing the model to learn over a specific ordering, thereby increasing complexity and reducing generalization.
    \item \textbf{Inefficiency.} Generating a graph sequence $(x_1, x_2, ..., x_{N})$ with a conventional self-attention transformer incurs $\mathcal{O}(n)$ autoregressive steps and $\mathcal{O}(n^3)$ computational cost (see Appendix \ref{complexity proof}).
\end{enumerate}

Drawing inspiration from recent breakthroughs in image generation—notably the Visual Autoregressive Modeling (VAR) framework \cite{tian2024visual, ren2025xar}—our proposed method suggests a new ordering for graphs that is \textbf{multi-scale, coarse-to-fine}. We introduce \texttt{MAG}, a novel transformer-based graph generation framework based on next-scale prediction instead of traditional \textit{next-node} and \textit{next-edge} approaches, thereby infusing autoregressive properties with graphs' structural constraints. As shown in Figure \ref{fig:architecture_overview}, our method first encodes a graph into multiple scales of latent representations and learns to generate scales autoregressively, starting from a singular token and progressively expanding its resolution. At each step, a transformer predicts the next-resolution token map, conditioned on previous scales and a class label $\texttt{[C]}$ for conditional synthesis. 

\texttt{MAG} poses significant improvements over traditional autoregressive methods on both generic and molecular graph datasets. Notably, it reduces the performance gap with diffusion baselines, surpassing them in wall-clock training and inference time, with strong performance. This provides strong evidence regarding the effectiveness of next-scale prediction for high-quality graph generation. In summary, our contributions are threefold:

\begin{figure*}[t]
	\centering
	\includegraphics[width=0.86\linewidth]{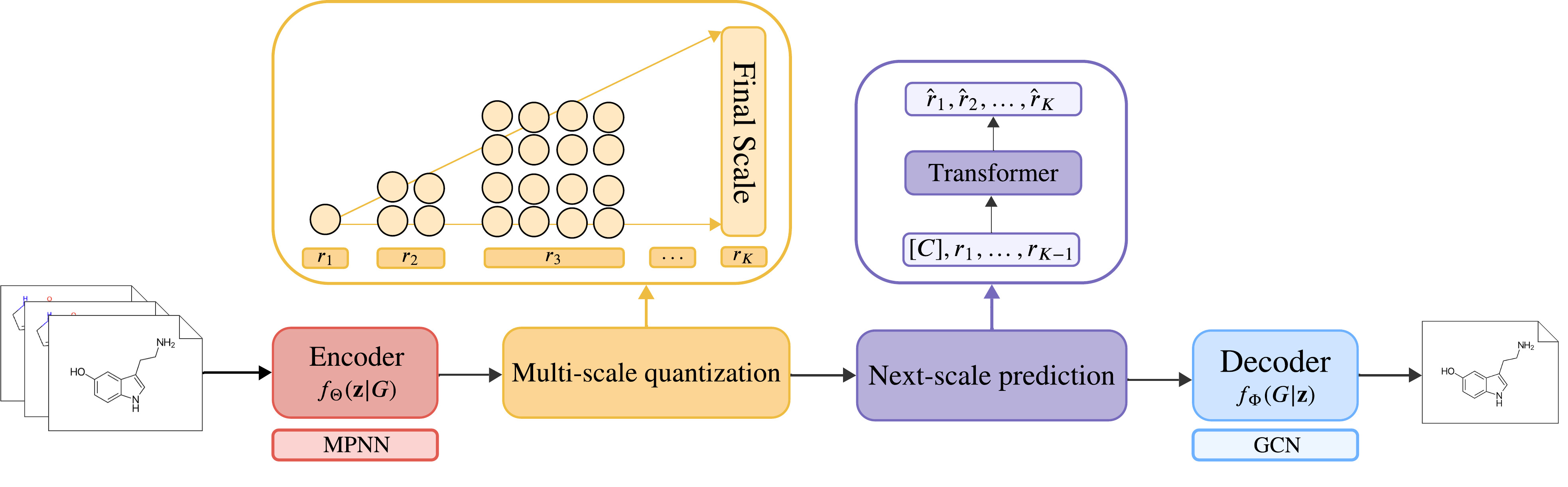}
    \caption{\textbf{Overview of the Mutli-Scale Graph Generation Framework.} The input graph is first encoded through an MPNN, which produces a latent representation that is quantized at multiple scales. A Transformer then predicts the scales autoregressively via a coarse-to-fine next-scale process, and a GCN-based decoder reconstructs the graph from the final discrete latent representation $r_K$.
}
	\label{fig:architecture_overview}
\end{figure*}

\begin{enumerate}
    \item A novel next-scale prediction framework for hierarchical graph generation that overcomes the limitations of traditional AR approaches on graph properties.
    \item A thorough theoretical and empirical analysis of how multi-scale latent representations can effectively capture high-order motifs while remaining significantly more efficient than counterparts.
    \item Experiments on diverse graph datasets, demonstrating significant improvements over AR baselines and promising performance when compared to state-of-the-art diffusion methods.
\end{enumerate}

\section{Related Work}

\paragraph{Diffusion and score-based models} 
Diffusion models achieve permutation-invariance by parameterizing the denoising/score function with a permutation equivariant network, at the cost of up to thousands of sampling steps. EDP-GNN \cite{niu2020permutation} is the first work to adapt score-based models to graph generation by representing graphs as matrices with continuous values. GDSS \cite{jo2022GDSS} generalizes this approach by leveraging SDE-based diffusion and replacing adjacency matrices with node and edge features. However, by using continuous-state diffusion, these approaches ignore graphs' discrete nature, resulting in fully connected graphs. DiGress \cite{vignac2023digress} is the first to apply discrete-state diffusion, achieving significant improvements over continuous methods. Yet, they require additional structural and domain-specific features to break symmetries and achieve high generation quality, further increasing computational costs. In response, SwinGNN \cite{yan2024swingnn} argues that learning exchangeable probabilities with equivariant architectures is challenging, instead proposing fixed node orderings and non-equivariant score functions to alleviate complexity while maintaining competitive performance. 

\paragraph{Autoregressive models} 
Traditional AR models construct graphs by adding nodes and edges sequentially. Although this approach aligns with the discrete nature of graphs, it faces a fundamental challenge that there is no inherent order in graph generation. Various strategies have been explored to address this by enforcing some node ordering and approximating the marginalization over permutations. \citet{li2018learning} suggest using random or deterministic empirical orderings. GraphRNN \cite{you2018graphrnn} aligns permutations with Breadth-First Search (BFS) ordering through a many-to-one mapping. GRANs \cite{liao2020efficient} propose an approach to marginalize over a family of canonical node orderings, such as node degree descending, BFS trees rooted at the highest degree node, and k-core orderings. \citet{chen2021order} eliminate ad-hoc orderings altogether by modeling the conditional probability of orderings with another AR model to estimate the marginalized probabilities for both the generative and ordering-selection models. Despite these efforts, all such methods remain fundamentally limited by the need to impose artificial node orderings.

\paragraph{Hybrid and hierarchical models}
Several recent works seek to blend the strengths of diffusion and autoregressive (AR) paradigms.  PARD \cite{zhao2024pard} first partitions the graph into blocks, applies diffusion within each block and AR between them, and enforces permutation invariance via \emph{partial ordering}.  While effective, its block-wise diffusion introduces extra implementation complexity and still incurs substantial sampling costs \cite{hou2024improving}.  GraphArm \cite{kong2023autoregressive} instead embeds diffusion steps into an AR framework—at each forward pass, a single node and its edges stochastically decay—making the model both permutation- and order-sensitive.  Beyond these hybrids, pure coarse-to-fine strategies have also been explored: Bergmeister et al.\ \cite{bergmeister2024efficient} grow a full graph from one seed node using localized denoising diffusion and spectral cues, matching diffusion-only runtimes but retaining their high sampling overhead.  In the molecular domain, JT-VAE \cite{jin2018junction} and HierVAE \cite{jin2020hierarchical} build molecules by assembling chemically valid substructures, and HiGGs \cite{davies2023size} hierarchically samples entire graphs across multiple resolutions.  However, all of these rely on either block-level diffusion, node-level decay processes, or strong domain priors (e.g., motif libraries or spectral conditioning). In contrast, our framework is fully domain-agnostic and permutation-equivariant, leveraging multi-scale quantized latent maps with a generic transformer—without any hand-crafted substructure priors.

\section{Method}

Consider a graph defined by the quadruple 
\(G = (\mathcal{V}, E, \mathbf{X}, \mathbf{B})\), where \(\mathcal{V}\) denotes the set of $N$ vertices, \(E \subseteq \mathcal{V} \times \mathcal{V}\) represents the set of edges, \(\mathbf{X} \in \mathbb{R}^{N \times D}\) is a matrix of $D$-dimensional node features, and \(\mathbf{B} \in \mathbb{R}^{N \times N \times F}\) comprises edge attributes with dimensions \(F\). Given a collection of \(M\) observed graphs, \(\mathcal{G} = \{G_i\}_{i=1}^{M}\), the task of graph generation is to model the underlying distribution \(p(\mathcal{G})\) in order to generate new graphs, \(G_{\text{sample}} \sim p(\mathcal{G})\).

\subsection{Preliminary: Autoregression by Scale}

AR models represent the joint distribution over \(N\) random variables by employing the chain rule of probability. In particular, they decompose the generation process into sequential steps, where each step determines the subsequent action based on the current subgraph. Traditionally, previous works have taken a \emph{next-token prediction} approach, generating the graph one node and its edges at a time. Specifically, at step \(i\), we introduce node \(G_i^\pi\) along with any new edges connecting it to nodes in \(\{G_1^\pi, G_2^\pi, \ldots, G_{i-1}^\pi\}\). The general formulation of AR models for graphs is given by:
\[
p(\mathcal{G}^{\pi}) = \prod_{i=1}^{N} p\bigl(G_i^{\pi} \mid G_1^{\pi}, G_2^{\pi}, \dots,G_{i-1}^{\pi}).
\]
Since AR models proceed in a unidirectional sequential manner, applying them requires a predetermined ordering \(\pi\) of nodes in the graph.

Alternatively, rather than employing \textit{next-token prediction}, we propose utilizing \emph{next-scale prediction}, wherein the autoregressive unit is a latent representation of the entire graph rather than a single token. To implement this, we first obtain latent representations at multiple scales and train a transformer to generate these latent maps. Similarly to \citet{tian2024visual}, this leads to a two-stage architecture: \textbf{(1)} a VQ-VAE tokenizer that quantizes the input graphs into latent maps at multiple scales (refer to Section \ref{permutation}), and \textbf{(2)} a transformer that predicts next-scale latent maps (refer to Section \ref{AR generation}).

\begin{figure*}[t]
    \centering    
    \includegraphics[width=0.90\linewidth]{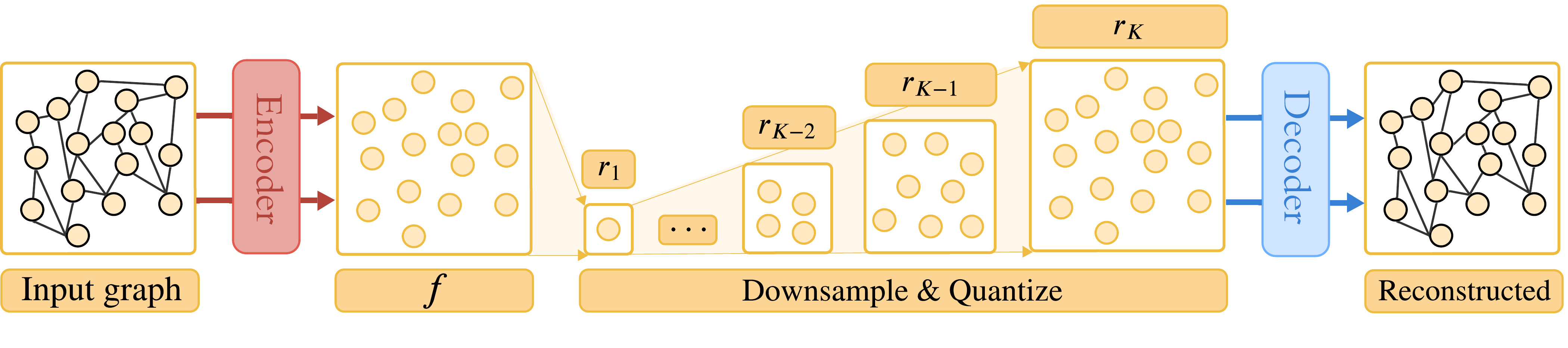}
    \caption{%
    \textbf{Permutation Equivariant Multi-Scale VQ-VAE.}
    The encoder converts the input graph into a continuous latent representation \(f\), which is downsampled and quantized into discrete token maps \(\{r_1, \dots, r_K\}\). The final token map \(r_K\) is then passed to the GCN-based decoder to reconstruct the original graph, preserving permutation equivariance throughout.}
    \label{fig:vqvae}
\end{figure*}

\subsection{Permutation Equivariant Multi-Scale Tokenizer} \label{permutation}

We propose a permutation-equivariant quantized autoencoder to encode a graph $G$ to $K$ multi-scale discrete latent maps $R = \{r_1, r_2, ..., r_K\}$, where each scale $r_k$ depends only on its prefix $\{r_1, r_2, ..., r_{k-1}\}$. For quantization, a shared codebook $Z$ is used across all scales, ensuring that each token in $r_k$ belongs to the same vocabulary. An overview of the tokenizer is given in Figure \ref{fig:vqvae}.

\paragraph{Encoding} 
Given an input graph $G \in \mathbb{R}^{N \times D} \times \mathbb{R}^{N \times N \times F}$, an autoencoder $\mathcal{E}(\cdot) : \mathbb{R}^{N \times D} \times \mathbb{R}^{N \times N \times F} \rightarrow \mathbb{R}^{N \times C}$ is used to convert $G$ into a continuous latent representation $f$:
\[
f = \mathcal{E}(G), \quad f \in \mathbb{R}^{N \times C},
\]
where $C$ is the latent's feature dimension defined empirically. In order to encode the graph's node and edge features into a unified latent space $\mathbb{R}^{N \times C}$, we leverage a series of standard MPNNs, such that edge features are propagated within node features before compressing node dimensions to $C$ and dropping remaining edges. 

Concretely, each MPNN layer performs message passing by first computing learned linear transformations of neighbor node embeddings and their connecting edge attributes, aggregating these via a sum operation, and then applying a ReLU nonlinearity. We stack $L_e$ such layers—with residual connections and layer normalization—to ensure stable training and encode higher-order structural patterns throughout the graph within nodes.

$f$ is then downsampled to create $K$ different coarse-to-fine latent scales $F = \{f_1, f_2, ..., f_K\}$, with $f_K = f$. Intermediate scales are learned by the model to autoregressively generate the graph's latent representation, scale-wise. 

\paragraph{Quantizing} 
A quantizer is used to convert each scale's latent representation into discrete tokens. The quantizer $\mathcal{Q}(\cdot) : \mathbb{R}^{N\times C} \rightarrow \mathbb{R}^{N \times C_Z}$ includes a codebook $Z \in \mathbb{R}^{V \times C_Z}$ containing $V$ learnable embeddings, for which each feature's dimension is maintained by convention, i.e., $C_Z = C$. During quantization, $q=\mathcal{Q}(f)$ is obtained by replacing each feature vector $f^{(k, i)}$ from the multi-scale latent features $F$ by its nearest code $q^{(k,i)}$ in Euclidean distance:
\[
    q^{(k,i)} = ( \text{argmin}_{v \in V} || \text{Select}(Z, v) - f^{(k,i)}||_2 ) \in V,
\]
where $\text{Select}(Z, v)$ denotes selecting the $v^{th}$ vector in codebook $Z$, and $ i \leq N$. Specifically, $\mathcal{Q}(\cdot)$ is trained by fetching all $q^{(k,i)}$ and minimizing the distance between $q$ and $f$.

\paragraph{Decoding} 
Finally, a decoder is used to reconstruct the original graph given its final discrete latent representation, $r_K$. The decoder $\mathcal{D}(\cdot) : \mathbb{R}^{N \times C_Z} \rightarrow \mathbb{R}^{N \times D} \times \mathbb{R}^{N \times N \times F}$ recovers latent representations for edges between all pairs of nodes, and maps node and edge features back to their original dimensions $\mathbb{R}^{N \times D}$ and $\mathbb{R}^{N \times N \times F}$, respectively. 

We implement $\mathcal{D}(\cdot)$ with $L_d$ standard graph convolutional layers, interleaved with ReLU activations and layer normalization to stabilize training and capture higher-order structures. After the final GCN layer, edge logits are recovered by passing all concatenated pairs of node embeddings through an MLP layer, and node features are mapped back to input dimensions, $\mathbb{R}^D$, via a linear projection.

Because this architecture preserves node ordering and feature cardinality throughout encoding, quantization, and decoding, the proposed multi-scale autoencoder remains \textbf{permutation equivariant}. 
Note that any off-the-shelf autoencoder can be used, since downsampled scales are obtained by interpolating, and only the embedding $f$ (and thus final scale $r_K$) is used for decoding. This effectively makes the scales independent of the rest of the architecture.

\begin{figure*}
	\centering
\includegraphics[width=0.92\linewidth]{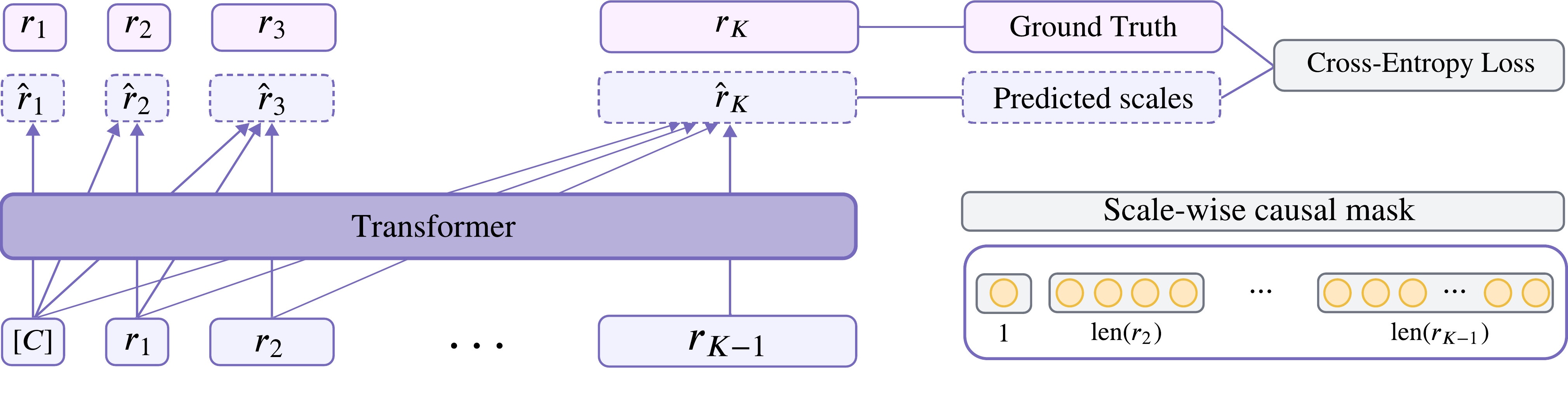}
    \caption{%
    \textbf{Autoregressive Next-scale Prediction with Scale-Wise Causal Mask.}
    At each step \(k\), the transformer attends to all previously generated scales \(\{r_1, \dots, r_{k-1}\}\) and predicts the next-scale token map \(r_k\) all at once. A scale-wise causal mask ensures that each scale only depends on its prefix. Cross-entropy loss is used for training the transformer.%
    }
    \label{fig:ar_generation}
\end{figure*}

\subsection{Autoregressive Graph Generation Through Next-Scale Prediction} \label{AR generation}

We define the autoregressive modeling on graphs by shifting from \textit{next-node} and \textit{next-edge} predictions to \textit{next-scale} predictions. Here, the autoregressive unit is \textit{an entire token map}, rather than \textit{a single token}. Given the quantized scales $\{r_1, \ldots, r_K\}$, the autoregressive likelihood is expressed as:
\[
p(r_1, r_2, \ldots, r_K) = \prod_{k=1}^{K} p(r_k \mid r_1, r_2, \ldots, r_{k-1}),
\]
where each autoregressive unit $r_k \in [V]^{n_k}$ is the token map at scale $k$ containing $n_k$ tokens, and the sequence $(r_1, r_2, \ldots, r_{k-1})$ serves as the prefix for $r_k$. In the $k$-th autoregressive step, we predict the $k$-th scale, such that all $n_k$ tokens in $r_k$ are generated in parallel, conditioned on $r_k$’s prefix. Note that during training, a block-wise causal attention mask is used to ensure that each $r_k$ can only attend to its prefix $r_{< k}$. During inference, kv-caching is used and no mask is needed. An overview of the generative model's  architecture is displayed in Figure \ref{fig:ar_generation}.

\paragraph{Level embedding}
A level embedding is learned to encode the scale at which a token operates within the input structure. This is implemented similarily to GPT's segment embedding \cite{devlin2019bert}, such that learned signals enhance the model’s capacity to capture multi-scale contextual dependencies. However, it is important to note that no additional positional encoding is used within scales, in order to leverage Attention’s permutation-invariance property.

\paragraph{Scale sampling}
During training and inference, we select scale sizes from a predefined set of lengths. Specifically, we fix the first scale to $1$ and the final scale to $N$, where $N$ is the target graph size. Intermediate scales are chosen from the predefined set (e.g., $R = \{1, 2, 4, 6, 9\}$), such that $\forall r_k \in R, r_k \leq N$; and the final scale $r_K = N$ (e.g., truncating to $\{1, 2, 4, 5\}$ for a graph of $5$ nodes). Typically, we set the number of scales to be logarithmic in the number of nodes. Finally, to handle varying graph sizes and enable parallel generation, padding is used with appropriate masking. 

\paragraph{Implementation details}
We adopt a standard decoder-only Transformer architecture with adaptive layer normalization (AdaLN), and input tokens are embedded via a linear layer. Unlike Graphormer \cite{ying2021do} or DiGress \cite{vignac2023digress}, we do not use any additional positional embeddings.
For class-conditional synthesis, a class embedding serves both as the start token $\texttt{[C]}$ and the conditioning input for AdaLN. Then, at each autoregressive step, the next scale is predicted such that all tokens in that scale are generated in parallel through a series of multi-head self-attention blocks. Each output embedding is passed through a classifier head to predict a categorical distribution over the quantization codebook, from which the most probable codeword is selected using a \texttt{softmax} operation.
Dropout and layer normalization are employed throughout the architecture to aid convergence. Additionally, query and key vectors are normalized to unit vectors prior to computing attention scores to improve training stability.
To support reproducibility, the hyperparameters used for training and inference are listed in Appendix~\ref{hyperparameters}.

\paragraph{Sampling at inference}
At inference, we apply top-$k$ and top-$p$ (nucleus) sampling to introduce diversity and control over the generation process. These methods are applied to the predicted categorical distributions at each decoding step.

\begin{figure*}[!hbt]
  \centering
  \includegraphics[width=\linewidth]{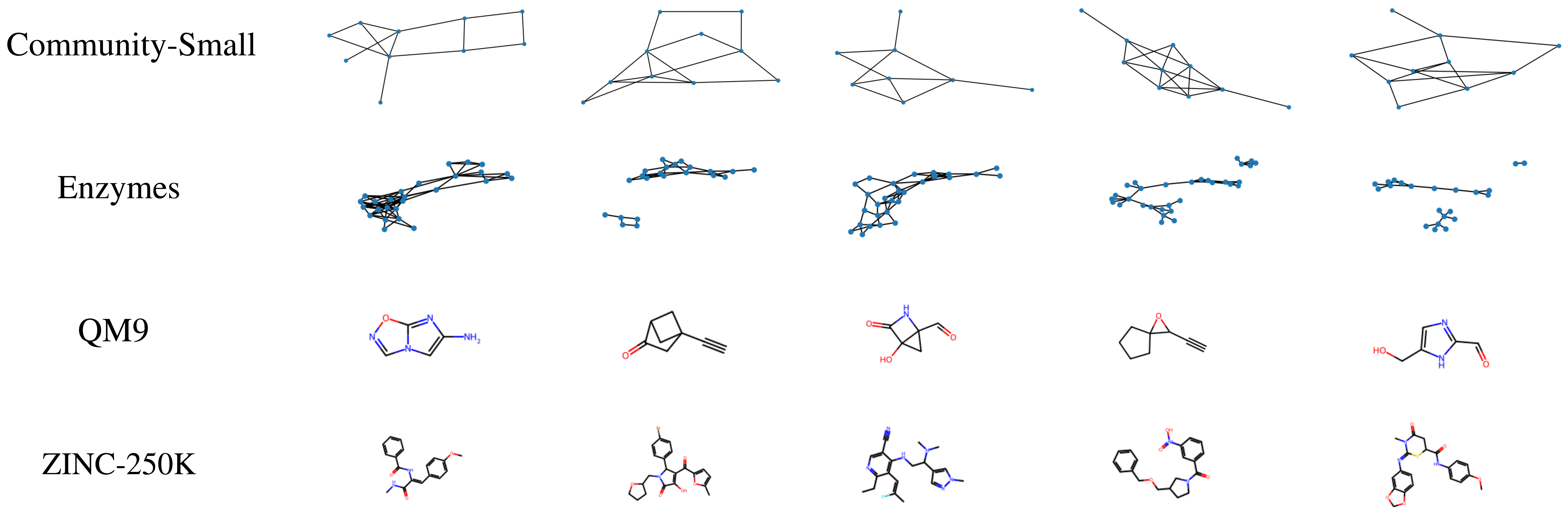}
  \caption{Samples of generated generic and molecular graphs by \texttt{MAG}.}
  \label{fig:samples}
\end{figure*}

\paragraph{Discussion} 

As a result, multi-scale prediction offers substantial properties that allow \texttt{MAG} to address the three previously mentioned issues as follows:

\begin{enumerate}
    \item Given the limited number of intermediate scales in the generations, long-range dependency is reduced from $\mathcal{O}(n)$ to $\mathcal{O}(k)$, where \(k \approx \log(n)\) is the number of scales. Furthermore, sequential dependency is strengthened by generating a scale of the whole graph at each autoregressive step.
    \item No explicit node ordering is needed as \textbf{(i)} the tokenizer is permutation equivariant, and \textbf{(ii)} tokens in each map, $r_k$, are fully correlated and generated in parallel. 
    \item By generating an entire token map in parallel at each scale, the generation's complexity is reduced from \(\mathcal{O}(n^3)\) to \(\mathcal{O}(n^2)\). The proof is detailed in Appendix \ref{complexity proof}. 
\end{enumerate}

\section{Experiments}

We empirically evaluate our model on experiments covering both generic and molecular graph generation. These two domains are chosen to benchmark the model's versatility: generic graphs allow the assessment of structural fidelity and scalability, while molecular graphs pose domain-specific challenges like chemical validity and functional constraints. This dual setting probes \texttt{MAG}'s ability to generate diverse graphs under different structural and semantic constraints.

Generation quality is measured using a combination of distributional and domain-specific metrics. Moreover, we evaluate generation wall-clock time against state-of-the-art diffusion, hybrid, and VAE models to compare their inference efficiency. Both training and evaluation procedures were executed in five independent trials, with error bars denoting the robustness of the experiments.

\paragraph{Baselines}
We compare \texttt{MAG} to state-of-the-art models trained for both generic and molecule generation. Specifically, baselines include diffusion models, such as EDP-GNN \citep{niu2020permutation}, GDSS \citep{jo2022GDSS}, and DiGress \citep{vignac2023digress}; hybrid models, such as GraphAF \citep{shi2020graphaf}, GraphDF \citep{luo2021graphdf} and GraphRNN \cite{you2018graphrnn} (generic graphs only); and VAE models, such as GraphVAE \citep{simonovsky2018graphvae} and GramVAE \citep{kusner2017grammar} (molecular graphs only).

\subsection{Generic Graph Generation}

\paragraph{Experimental Setup}
We evaluate \texttt{MAG} at generic graph generation on the Community-small \cite{jo2022GDSS} and Enzymes \cite{Schomburg2004BRENDA} datasets. Community-small is a synthetic dataset containing 100 random graphs consisting of two equal-sized communities, with between 12 and 20 nodes per graph. Here, the node and edge representations are binary (i.e., absence or presence). The Enzymes dataset contains 587 enzyme graphs with 10 to 125 nodes per sample. Each of the graphs represents a protein as a set of nodes—each node carrying a 3-dimensional one-hot feature for $\alpha$-helix, $\beta$-strand or coil—and binary edges indicating sequence adjacency or the three nearest spatial neighbors. There are no edge attributes, and each graph is annotated with one of six top-level EC enzyme class labels.

These datasets provide a wide range of graph properties, from small structured communities in Community-small to larger, more complex protein graphs in Enzymes. We aim to investigate the ability of multi-scale graph generation to generate graphs of different sizes and complexities, providing insights into its scalability and generalization abilities.

To ensure fair comparison, we follow the same setup as \citep{kong2023autoregressive, yan2024swingnn, jo2022GDSS}, randomly splitting the dataset into 80\% for training and 20\% for testing. Performance is evaluated by comparing the structure of the generated graphs with those from the dataset according to the maximum mean discrepancy (MMD) of statistics including node degrees, clustering coefficients, and orbit counts. Moreover, the inference speed of \texttt{MAG} is compared to the baselines at generating graphs across varying sizes. Samples of generated graphs are given in Figure \ref{fig:samples}.

\begin{table*}[!htbp]
\centering
\renewcommand{\arraystretch}{1.3}
\caption{Results on \textbf{Community-small} and \textbf{Enzymes} benchmark datasets. Wall-clock inference times are reported in seconds for 50 and 100 samples, respectively, on an NVIDIA GeForce RTX 4060 GPU.}
\vspace{4.5pt}
\label{tab:no_mol_results}
\begin{tabular}{lcccccccc}
\toprule
\multirow{2}{*}{\textbf{Models}} 
& \multicolumn{4}{c}{\textbf{Community-S} ($|\mathcal{V}| \in [12,20]$, 100 obs.)} 
& \multicolumn{4}{c}{\textbf{Enzymes} ($|\mathcal{V}| \in [10,125]$, 587 obs.)} \\
\cmidrule(lr){2-5} \cmidrule(lr){6-9}
& Deg. $\downarrow$ & Clus. $\downarrow$ & Orbit. $\downarrow$ & Time (s) $\downarrow$
& Deg. $\downarrow$ & Clus. $\downarrow$ & Orbit. $\downarrow$ & Time (s) $\downarrow$ \\
\midrule
\rowcolor{gray!10} 
\multicolumn{9}{c}{\textit{Diffusion Models}}\\
EDP-GNN   & 0.053 & 0.144 & 0.026 & 1.94$e^4$ & 0.023 & 0.268 & 0.082 & 2.61$e^4$ \\
GDSS      & \textbf{0.045} & 0.086 & 0.007 & 2.87$e^3$ & 0.026 & 0.061 & 0.009 & 1.74$e^3$ \\
DiGress   & 0.047 & \textbf{0.041} & 0.026 & 38.1 & 0.004 & 0.083 & 0.002 & 58.8 \\
\midrule
\rowcolor{gray!10} 
\multicolumn{9}{c}{\textit{Hybrid Models}}\\
GraphAF   & 0.180 & 0.200 & 0.020 & 1.29$e^2$ & 1.669 & 1.283 & 0.266 & 46.1 \\
GraphDF   & 0.060 & 0.120 & 0.030 & 68.8 & 1.503 & 1.061 & 0.202 & 68.1 \\
GraphRNN  & 0.080 & 0.120 & 0.040 & 1.10$e^2$ & 0.017 & 0.062 & 0.046 & 1.25$e^2$ \\
\midrule
\rowcolor{gray!10} 
\multicolumn{9}{c}{\textit{Varational Autoencoders}}\\
GraphVAE  & 0.350 & 0.980 & 0.540 & 1.34 & 1.369 & 0.629 & 0.191 & 2.00 \\
\midrule
\textbf{Ours}  & 0.077 & 0.102 & 0.022 & \textbf{0.187} & 1.753  & 1.148 & 0.223 & \textbf{1.88} \\
\textbf{$\pm$ std.}  
& $\pm$0.007 & $\pm$0.009 & $\pm$0.003 & $\pm$0.015  
& $\pm$0.150 & $\pm$0.100 & $\pm$0.034 & $\pm$0.200 \\
\bottomrule
\end{tabular}
\end{table*}

\paragraph{Results}

\texttt{MAG} achieves strong performance with inference times orders of magnitude faster (Table \ref{tab:no_mol_results}). On the smaller Community-small dataset, it consistently outperforms both hybrid models and VAEs. Notably, its performance is the closest to state-of-the-art GDSS and DiGress models, while being $204\times$ and $15{,}347\times$ faster, respectively.

On the more complex Enzymes dataset, \texttt{MAG} exhibits limitations in qualitative performance. Similarly to hybrid models, it struggles with generating larger enzyme graphs, suggesting possible scalability challenges. This may be due to the increased number of scales required for larger graphs, resulting in higher sensitivity to predefined scale sizes. Despite this limitation, \texttt{MAG}'s computational efficiency scales robustly with graph size, indicating strong potential should its performance be further improved.

In terms of wall-clock inference time, \texttt{MAG} delivers significant improvements across all datasets, surpassing all baselines by several orders of magnitude. Remarkably, it achieves inference speeds up to \textbf{thousands of times faster} than state-of-the-art models.

\subsection{Molecule Generation} 

\paragraph{Experimental Setup}
\texttt{MAG} is further extended to generate molecular graphs with node and edge features, and evaluated on two popular molecular datasets, namely QM9 \cite{ramakrishnan2014qm9} and ZINC-250K \cite{zinc_subset}. The former contains 133,885 small organic molecules of up to nine heavy atoms with types C, O, N and F. The latter is a subset of the ZINC database \cite{irwin2012zinc} and consists of 250,000 molecules with up to 38 heavy atoms of nine types.

In contrast to generic graph generation, molecules present a unique challenge for multi-scale graph generation due to their strict structural constraints and varying complexity. Evaluating \texttt{MAG} on both molecular datasets assesses its ability to generate realistic molecules ranging from small organic compounds to larger chemical structures, while adhering to chemical constraints like valency, introducing additional challenges for the model to maintain validity.

The quality of generated molecules is measured in terms of validity, uniqueness, novelty, and Fréchet ChemNet Distance (FCD). Details on these metrics are delineated in Appendix \ref{app:metrics}. Training and evaluation were each performed in five independent replicates, with the resulting performance robustness illustrated by error bars. In addition, the wall-clock inference time to generate 1,000 molecules from both datasets is recorded and compared against baselines. Samples of generated molecules are displayed in Figure \ref{fig:samples}. 


\paragraph{Results} Performance on molecular datasets are given in Table \ref{tab:mol_results}. \texttt{MAG} consistently outperforms VAE baselines across most metrics. Moreover, \texttt{MAG} achieves performance comparable to that of recent hybrid models. Compared to state-of-the-art diffusion models, \texttt{MAG} achieves the highest uniqueness and matches most of them in novelty. Regarding validity, although it trails slightly behind GDSS, it proves to outperform EDP-GNN. Of all baselines, only DiGress exhibits a notable performance advantage over \texttt{MAG}; however, DiGress explicitly incorporates molecular-specific properties, while \texttt{MAG} approaches the task in a raw, domain-agnostic manner, which may account for this performance gap. For the larger and more complex ZINC-250k dataset, \texttt{MAG} outperforms VAE baselines across all metrics and delivers results comparable to those of hybrid models, while remaining slightly behind diffusion-based methods. These findings are consistent with observations on smaller molecules; however, \texttt{MAG} demonstrates greater consistency across different molecule sizes compared to diffusion models. Additionally, we observe reduced overhead when scaling to larger molecules relative to the results on the Enzymes dataset. 
Regarding computational efficiency, \texttt{MAG} further delivers substantial improvements, achieving inference speeds up to \textbf{hundreds of times faster} than the state of the art.

\begin{table*}
\centering
\renewcommand{\arraystretch}{1.3}
\caption{Results on \textbf{QM9} and \textbf{ZINC250k} molecule datasets. Wall-clock inference times are reported in seconds for 1,000 samples, respectively, on an NVIDIA GeForce RTX 4060 GPU (Val: Validity; Uni: Uniqueness; Nov: Novelty; FCD: Fréchet ChemNet Distance). Entries marked “--” denote results that were not reported or for which reproduction was not possible.}
\vspace{4.5pt}
\label{tab:mol_results}
\begin{tabular}{lcccccccccc}
\toprule
\multirow{2}{*}{\textbf{Models}} 
& \multicolumn{5}{c}{\textbf{QM9} ($|\mathcal{V}| \in [1,9]$, 134K mols.)} 
& \multicolumn{5}{c}{\textbf{ZINC250k} ($|\mathcal{V}| \in [6,38]$, 250K mols.)} \\
\cmidrule(lr){2-6} \cmidrule(lr){7-11}
& Val. $\uparrow$ & Uni. $\uparrow$ & Nov. $\uparrow$ & FCD $\downarrow$ & Time (s) $\downarrow$
& Val. $\uparrow$ & Uni. $\uparrow$ & Nov. $\uparrow$ & FCD $\downarrow$ & Time (s) $\downarrow$ \\
\midrule
\rowcolor{gray!10} 
\multicolumn{11}{c}{\textit{Diffusion Models}}\\
EDP-GNN   & 47.52 & 99.25 & 86.58 & 2.68 & 1.7$e^3$ & 82.97 & \textbf{99.79} & 100 & 16.74 & 1.6$e^3$ \\
GDSS      & 95.72 & 98.46 & 86.27 & 2.9 & 43.4 & \textbf{97.01} & 99.64 & 100 & \textbf{14.66} & 3.4$e^2$ \\
DiGress   & \textbf{99.0} & 96.66 & 33.4 & \textbf{0.36} & 34.1 & 91.02 & 81.23 & 100 & 23.06 & 2.0$e^2$ \\
\midrule
\rowcolor{gray!10} 
\multicolumn{11}{c}{\textit{Hybrid Models}}\\
GraphAF   & 74.43 & 88.64 & 86.59 & 5.27 & 1.1$e^3$ & 68.47 & 98.64 & 100 & 16.02 & 9.6$e^2$ \\
GraphDF   & 93.88 & 98.58 & 98.54 & 10.93 & 1.9$e^4$ & 90.61 & 99.63 & 100 & 33.55 & 9.3$e^3$ \\
\midrule
\rowcolor{gray!10} 
\multicolumn{11}{c}{\textit{Varational Autoencoders}}\\
GramVAE  & 20.00 & 19.70 & \textbf{100} & -- & --  & 30.10 & 27.30 & 100 & -- & -- \\
GraphVAE  & 45.80 & 30.50 & 66.10 & -- & -- & 44.00 & -- & -- & -- & -- \\
\midrule
\textbf{Ours}  & 89.43 & \textbf{99.31} & 83.14 & 5.77 & \textbf{3.84} & 84.22 & 87.88 & 100 & 19.70 & \textbf{19.7} \\
\textbf{$\pm$ std.}  
& $\pm$0.87 & $\pm$0.38 & $\pm$1.72& $\pm$0.49 & $\pm$1.40  
& $\pm$1.32 & $\pm$0.53 & $\pm$0.00 & $\pm$1.06 & $\pm$0.58 \\
\bottomrule
\end{tabular}
\end{table*}

\section{Future Work and Limitations}
This work primarily focused on introducing a novel paradigm for autoregressive graph generation, adopting a \textit{next-scale} prediction approach instead of traditional \textit{next-node} and \textit{next-edge} methods. As a result, a promising avenue for future research is to explore how autoregressive properties from the large language model (LLM) literature translate to graph generation. Specifically, efforts could be made to investigate the generalization capabilities of autoregressive models in zero-shot and few-shot downstream tasks, which have been shown to be limitations of state-of-the-art diffusion-based graph models \cite{vignac2023digress}. Furthermore, future work could explore fine-tuning strategies as a solution to tasks with little data and resources.

Although \texttt{MAG} demonstrated substantial speed improvements, experiments on larger graphs revealed qualitative limitations—likely stemming from our use of a fixed set of scales. Incorporating inductive biases or domain-specific signals into scale selection could enhance both performance and scalability on more complex structures. Moreover, autoregressive models are known for their inherent scalability, suggesting that \texttt{MAG}'s performance on larger graphs could further improve through optimized scale selection. A promising direction is to train a unified decoder that maps any intermediate latent scale back to a graph, much like \citeauthor{jiao2025flexvar}’s multi-resolution VAE. This would not only allow interpretation and visualization of the coarse-to-fine hierarchy and how motifs emerge, but also drive a data-driven choice for scale counts and resolutions, alleviating our current dependence on manually tuned scale hyperparameters.

Finally, this work opens a new direction for hybrid autoregressive-diffusion methods. While current hybrid approaches primarily rely on block-wise diffusion with sequential learning between blocks, the proposed framework enables reconsidering these methods by applying scale-wise diffusion, i.e., diffusion between scales. This shift could eliminate the need for a quantized autoencoder, as diffusion offers strong capabilities in continuous space, thereby removing the quantization bottleneck. However, this would reintroduce computational overheads which should be studied as a trade-off against existing diffusion-based methods.

\section{Conclusion}
This work introduced a novel diffusion-free multi-scale autoregressive model for graph generation which \textbf{(1)} theoretically addresses limitations of traditional autoregressive models due to inherent graph properties and \textbf{(2)} reduces the gap between diffusion-free AR models and diffusion-based methods, with up to three orders of magnitude lower computational costs. Experiments over both generic and molecular graph generations demonstrated the ability of \texttt{MAG} to generate high-quality samples that achieve promising performance and can adjust to domain-specific constraints such as chemical validity. We hope that our findings can aid the design of new approaches, in which the multi-scale paradigm may be applied to other graph generative methods.







\section*{Impact Statement}

Fast graph generation can accelerate scientific discovery by enabling rapid simulation of complex networks in fields such as epidemiology, ecology, and social science, lowering barriers for smaller research teams. It supports more resilient infrastructure and public‐health planning by allowing practitioners to model extreme scenarios at unprecedented speed. By democratizing access to advanced network modeling, these tools empower communities worldwide to tackle interconnected societal challenges.

\section*{Acknowledgment}
We thank Martinkus Karolis for his detailed feedback on architectural design, his comments on final drafts, and the provision of key literature references that strengthened our work. We also extend our gratitude to Petar Veli\v{c}kovi\'{c} for his early feedback on the foundations of this work and his continued support throughout.


\bibliography{example_paper}
\bibliographystyle{icml2025}


\appendix
\onecolumn
\section{Details for Hyperparameters} \label{hyperparameters}

\begin{table}[H]
\centering
\caption{Hyperparameters used for \texttt{MAG} during training and inference.}
\label{tab:hyperparameters}
\begin{tabular}{@{}l c@{}}
\toprule
\textbf{Hyperparameter}                        & \textbf{Value}                          \\ \midrule
\multicolumn{2}{c}{\emph{Multi‐Scale VQ‐VAE Tokenizer}}       \\ 
Encoder MPNN layers ($L_e$)              & 4                                       \\
Decoder GCN layers ($L_d$)               & 4                                       \\
Hidden dimension                   & 32                                         \\
Latent dimension ($C$)                   & 16                                     \\
Codebook size ($V$)                      & 1024                                     \\ 
Commitment cost                   & 0.25                                         \\
Gamma                           & 0.1                                         \\ \midrule
\multicolumn{2}{c}{\emph{Multi-scale Transformer}}              \\
Transformer blocks                       & 8                                      \\
Hidden size                              & 256                                     \\
Attention heads                          & 8                                       \\
Level embedding dim                      & 256                                     \\
Layer dropout                            & 0.1                                     \\
Conditional dropout                      & 0.1                                     \\
Token dropout                            & 0.05                                     \\ \midrule
\multicolumn{2}{c}{\emph{Optimization \& Sampling}}           \\
Optimizer                                & Adam                                    \\
Learning rate                            & $3\times10^{-5}$                        \\
Weight decay                             & $1\times10^{-2}$                        \\
Betas                                    & (0.9, 0.99)
                \\
Batch size                               & 12                                      \\
Training epochs                          & 100                                     \\
Top-$k$ sampling ($k$)                   & 50                                      \\
Top-$p$ sampling ($p$)         & 0.95                                    \\ \bottomrule
\end{tabular}
\end{table}

\section{Complexity Proof}
\label{complexity proof}

This proof entails the improvement in generation complexity from node-by-node to scale-wise AR models.

\begin{lemma}[AR Generation for Nodes]
For a standard self-attention transformer, the time complexity of autoregressive (AR) graph generation is $O(N^3)$, where $N$ is the total number of nodes.
\end{lemma}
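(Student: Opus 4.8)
The plan is to decompose the cost of autoregressive graph generation into its $N$ sequential steps and bound the work performed at each step. First I would fix the generation protocol implied by the next-token factorization $p(\mathcal{G}^{\pi}) = \prod_{i=1}^{N} p(G_i^{\pi} \mid G_1^{\pi},\dots,G_{i-1}^{\pi})$: at step $i$ the model is handed the already-generated prefix — the subgraph on nodes $1,\dots,i-1$, serialized as a sequence of $\Theta(i)$ node-level tokens with incident-edge information attached — and must emit node $i$ together with its edges to the prefix. A vanilla decoder-only transformer run without key/value caching recomputes all hidden states from scratch on each invocation, so step $i$ is exactly one full forward pass over a length-$\Theta(i)$ sequence.

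Next I would cost a single forward pass as a function of sequence length $\ell$. Per layer, the dominant term is self-attention: forming the $\ell\times\ell$ array of query–key inner products and applying it to the value matrix costs $\Theta(\ell^2 d)$ for hidden width $d$, while the token-wise projections and feed-forward sublayers contribute only $\Theta(\ell d^2)$. Holding the width $d$ and the number of layers fixed, a forward pass over a length-$\ell$ prefix is $\Theta(\ell^2)$; the additional $O(i)$ work to read off the $i-1$ candidate edges of node $i$ from the final hidden state does not change this. Hence step $i$ costs $\Theta(i^2)$, and summing over the $N$ steps gives $\sum_{i=1}^{N}\Theta(i^2)=\Theta(N^3)$, establishing the claimed $O(N^3)$ bound (indeed a tight one under this protocol). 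I would close with a short remark that key/value caching removes a factor of $N$ from one attention dimension, cutting the per-step cost to $\Theta(i)$ and the total to $\Theta(N^2)$ — so the $O(N^3)$ figure is precisely the price of the naive cache-free inference loop that next-scale prediction is meant to supplant.

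The main obstacle is not any hard computation but pinning down the model of computation so that the statement is simultaneously true and meaningful. One must be explicit that (i) inference is cache-free, so per step one pays for a whole forward pass rather than an incremental update; (ii) the graph is serialized at the node level into a length-$\Theta(N)$ sequence with edges predicted as a per-step side output, rather than as an $\Theta(N^2)$-token flattening of the adjacency matrix, which would change the exponent; and (iii) hidden width and depth are treated as constants absorbed into the $O(\cdot)$. I would therefore state these conventions up front and then let the $\sum_{i=1}^{N} i^2 = \Theta(N^3)$ accounting carry the proof.
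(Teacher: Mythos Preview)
Your proposal is correct and follows the same approach as the paper: both argue that step $i$ incurs $\Theta(i^2)$ self-attention cost over the length-$i$ prefix and then sum $\sum_{i=1}^{N} i^2 = \Theta(N^3)$. You are simply more explicit than the paper about the assumptions needed to make that per-step cost hold---cache-free inference, node-level (not edge-level) serialization, and fixed width/depth---which is a welcome clarification rather than a different route.
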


\begin{proof}
In AR generation, nodes are generated sequentially. At step $i$ ($1 \leq i \leq N$), the model computes attention over all $i-1$ previously generated nodes. The complexity for step $i$ is $O(i^2)$. Summing over all steps:
\[
\sum_{i=1}^{N} i^2 = \frac{N(N+1)(2N+1)}{6} \sim O(N^3).
\]
\end{proof}
\begin{lemma}[Scale-wise Generation for Nodes]
For a standard self-attention transformer with constant $a > 1$, the time complexity of variable autoregressive (VAR) graph generation is $O(N^2)$, where $N$ is the total number of nodes, and $K = \log_a N + 1$ scales are used.
\end{lemma}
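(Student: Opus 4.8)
The plan is to mirror the structure of the preceding lemma's proof but replace the per-step attention cost with the cost of attending over a growing \emph{prefix of scales} rather than a growing prefix of individual nodes. First I would fix the scale schedule: with $K = \log_a N + 1$ scales and the geometric growth $n_k = a^{k-1}$ (so that $n_1 = 1$ and $n_K = N$), the token map at scale $k$ has $n_k$ tokens, and the prefix available at step $k$ has total length $\sum_{j=1}^{k} n_j = \frac{a^k - 1}{a - 1} = \Theta(a^k)$. This is the key quantitative fact I would establish before anything else, since it controls every subsequent bound.

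Next I would compute the cost of the $k$-th autoregressive step. Generating scale $k$ requires a self-attention pass in which the $n_k$ new queries attend over the full prefix of length $\Theta(a^k)$; since the prefix already includes $r_k$'s own tokens in the block-causal setup, the dominant term is attention over $\Theta(a^k)$ tokens against $\Theta(a^k)$ tokens, giving $O\bigl((a^k)^2\bigr) = O(a^{2k})$ per step. (If one prefers to count only the $n_k \times \Theta(a^k)$ query-key interactions for the newly generated scale, the bound is $O(n_k \cdot a^k) = O(a^{2k})$ as well, so the estimate is robust to this modeling choice.) I would then sum over all scales:
\[
\sum_{k=1}^{K} a^{2k} = a^2 \cdot \frac{a^{2K} - 1}{a^2 - 1} \le \frac{a^2}{a^2 - 1}\, a^{2K}.
\]
Since $a^{K} = a^{\log_a N + 1} = aN$, we get $a^{2K} = a^2 N^2$, hence the whole sum is $O(N^2)$, with the constant depending only on $a$. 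The geometric series is dominated by its last term, which is exactly the final full-resolution step costing $\Theta(N^2)$ — this gives the clean intuition that scale-wise generation costs essentially one transformer forward pass on the full graph.

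The main obstacle I anticipate is not the algebra but pinning down the bookkeeping assumptions so the $O(N^2)$ claim is actually tight and honest: specifically, (i) justifying that the per-step cost is the attention term $O(a^{2k})$ and not something larger from the feed-forward or classifier-head layers, which contribute only $O(a^k \cdot d^2)$ and are lower-order for fixed model dimension $d$; (ii) handling the fact that $\log_a N$ need not be an integer, so $K = \lceil \log_a N \rceil + 1$ and the top scale is truncated to exactly $N$ tokens, which only affects constants; and (iii) making explicit that the number of \emph{autoregressive steps} drops from $O(N)$ to $O(\log N)$, which is the companion claim in the paper's Discussion. Once these are stated as remarks, the geometric-sum computation closes the proof with room to spare.
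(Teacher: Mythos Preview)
Your proposal is correct and follows essentially the same approach as the paper: fix the geometric schedule $n_k = a^{k-1}$, bound the step-$k$ attention cost by $n_k$ times the prefix length $\Theta(a^k)$ to get $O(a^{2k})$, and sum the geometric series to $O(a^{2K}) = O(N^2)$. Your extra bookkeeping remarks on feed-forward cost, non-integer $\log_a N$, and the $O(\log N)$ step count are sound and go beyond what the paper spells out, but the core argument is identical.
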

\begin{proof}
Define the resolution sequence $\{n_k\}$, where $n_k = a^{k-1}$ nodes are added at scale $k$, and $n_K = N$. The total nodes up to scale $k$ is:
\[
S_k = \sum_{i=1}^k n_i = \frac{a^k - 1}{a - 1}.
\]
At each scale $k$, generating $n_k$ new nodes requires computing attention over all $S_{k-1}$ existing nodes. The complexity is:
\[
n_k \cdot S_{k-1} \approx a^{k-1} \cdot a^{k-1} = a^{2(k-1)}.
\]
Summing over all $K$ scales:
\[
\sum_{k=1}^K a^{2(k-1)} = \frac{a^{2K} - 1}{a^2 - 1} \sim O(a^{2K}) = O(N^2),
\]
since $a^{K-1} = N$ and thus $a^{2K} = a^2 N^2$.
\end{proof}

\section{Details for Evaluation Metrics} \label{app:metrics}

To evaluate the performance of \texttt{MAG} for generic graphs, we measure the degree, clustering, and orbit of the generated graphs. Degree measures the number of connections of each node, revealing hubs or sparsely connected regions. Clustering shows how likely a node’s neighbors are to be connected, indicating local group structures or tightly-knit communities. Orbit counts specific small patterns (subgraphs) within the graph, capturing repeating structural patterns. These metrics report whether the structure of the graphs in the dataset is preserved, indicating more meaningful and realistic graphs.

We assess molecular graph generation by examining validity, uniqueness, novelty, and the Frechet ChemNet Distance (FCD) metric. Validity reflects the proportion of generated structures that adhere to fundamental chemical rules—such as correct valency—without any post-hoc correction. Uniqueness gauges the degree to which those valid molecules are distinct from one another, while novelty captures the extent to which they lie outside the original training corpus. Finally, FCD quantifies how closely the distribution of generated compounds matches that of real chemicals by comparing the mean and covariance of their activations in ChemNet’s penultimate layer.


\end{document}